  \providecommand\BibTeX{{%
    \normalfont B\kern-0.5em{\scshape i\kern-0.25em b}\kern-0.8em\TeX}}}
\DeclareMathOperator{\gain}{\textbf{gain}}
\DeclareMathOperator{\GED}{GED}
\DeclareMathOperator*{\argmin}{arg\,min}
\DeclareMathOperator*{\argmax}{arg\,max}
\newcommand{\name}{\textsc{GCFExplainer}\xspace}
\DeclareMathOperator{\Cost}{\textbf{cost}}
\DeclareMathOperator{\Cover}{\textbf{coverage}}
\DeclareMathOperator{\Size}{\textbf{size}}
\DeclareMathOperator*{\agg}{agg}
\newtheorem{thm}{\textbf{Theorem}}
\newtheorem{definition}{\textbf{Definition}}
\newtheorem{problem}{\textbf{Problem}}
\setlist{nolistsep,leftmargin=*}
\tikzset{
    dot diameter/.store in=\dot@diameter,
    dot diameter=1pt,
    dot spacing/.store in=\dot@spacing,
    dot spacing=5.5pt,
    dots/.style={
        line width=\dot@diameter,
        line cap=round,
        dash pattern=on 0pt off \dot@spacing
    }
}
\begin{document}

\title{Global Counterfactual Explainer for Graph Neural Networks}
\author{Zexi Huang}
\authornote{Both authors contributed equally to this research.}
\affiliation{%
 \institution{University of California}
 \city{Santa Barbara}
 \state{CA}
 \country{USA}
}
\email{zexi_huang@cs.ucsb.edu}

\author{Mert Kosan}
\authornotemark[1]
\affiliation{%
 \institution{University of California}
 \city{Santa Barbara}
 \state{CA}
 \country{USA}}
\email{mertkosan@cs.ucsb.edu}

\author{Sourav Medya}
\affiliation{%
 \institution{University of Illinois}
 \city{Chicago}
 \state{IL}
 \country{USA}}
\email{medya@uic.edu}

\author{Sayan Ranu}
\affiliation{%
 \institution{Indian Institute of Technology}
 \city{Delhi}
 \country{India}}
\email{sayanranu@cse.iitd.ac.in}

\author{Ambuj Singh}
\affiliation{%
 \institution{University of California}
 \city{Santa Barbara}
 \state{CA}
 \country{USA}}
\email{ambuj@cs.ucsb.edu}

\begin{abstract}
Graph neural networks (GNNs) find applications in various domains 
such as computational biology, natural language processing, and computer security.
Owing to their popularity, there is an increasing need to explain GNN predictions since GNNs are black-box machine learning models. One way to address %
this is \textit{counterfactual} reasoning where the objective is to change the GNN prediction by minimal changes in the input graph. %
Existing methods for counterfactual explanation of GNNs are limited to instance-specific \emph{local} reasoning. 
This approach has two major limitations of not being able to offer global recourse policies and overloading human cognitive ability with too much information. In this work, we study the \textit{global} explainability of GNNs through global counterfactual reasoning. Specifically, we want to find a \textit{small} set of representative counterfactual graphs that explains \textit{all} input graphs. %
Towards this goal, we propose \name, a novel algorithm powered by \textit{vertex-reinforced random walks} on an \textit{edit map} of graphs with a \emph{greedy summary}.
Extensive experiments on real graph datasets show that the global explanation from \name provides important high-level insights of the model behavior and achieves a \textbf{46.9\%} gain in recourse coverage and a \textbf{9.5\%} reduction in recourse cost compared to the state-of-the-art local counterfactual explainers. 
\looseness=-1
\end{abstract}

\begin{CCSXML}
<ccs2012>
  <concept>
      <concept_id>10010147.10010178.10010187.10010192</concept_id>
      <concept_desc>Computing methodologies~Causal reasoning and diagnostics</concept_desc>
      <concept_significance>300</concept_significance>
      </concept>
  <concept>
      <concept_id>10003752.10003809.10003635</concept_id>
      <concept_desc>Theory of computation~Graph algorithms analysis</concept_desc>
      <concept_significance>300</concept_significance>
      </concept>
 </ccs2012>
\end{CCSXML}
\ccsdesc[300]{Computing methodologies~Causal reasoning and diagnostics}
\ccsdesc[300]{Theory of computation~Graph algorithms analysis}

\keywords{Counterfactual explanation; Graph neural networks}

\settopmatter{printfolios=true}
\maketitle
\section{Introduction}
Graph Neural Networks (GNNs) \cite{kipf2016semi, hamilton2017inductive, velivckovic2018graph, wang2021gnnadvisor, wang2022qgtc,graphreach} are being used in many domains such as drug discovery~\cite{jiang2020drug}, chip design~\cite{chip}, %
combinatorial optimization~\cite{gcomb}, physical simulations~\cite{lgnn,lgnn_benchmarking} and event prediction~\cite{kosan2021event,medya_earnings_22,tigger}. %
Taking the graph(s) as input, GNNs are trained to perform various downstream tasks %
that form the core of many real-world applications. For example, graph classification has been applied to predict whether a drug would exhibit the desired chemical activity~\cite{jiang2020drug}. Similarly, node prediction is used to predict the functionality of proteins in protein-protein interaction networks~\cite{borgwardt2005protein} and categorize users into roles on social networks~\cite{social}. \looseness=-1

Despite the impressive success of GNNs on predictive tasks, GNNs are \textit{black-box} machine learning models. It is non-trivial to explain or reason why a particular prediction is made by a GNN. Explainability of a prediction model is important to understand its shortcomings and identify areas for improvement. %
In addition, the ability to explain a model is critical towards making it trustworthy. Owing to this limitation of GNNs, there has been 
significant efforts in recent times towards explanation approaches. 
\looseness=-1

Existing work on explaining GNN predictions can be categorized mainly in two directions: 1) factual reasoning \cite{ying2019gnnexplainer,luo2020parameterized,vu2020pgm,xgnn_kdd20}, and 2) counterfactual reasoning \cite{lucic2021cf,bajaj2021robust,abrate2021counterfactual,tan2022learning}. Generally speaking, the methods in the first category aim to find an important subgraph that correlates most with the underlying GNN prediction. In contrast, the methods with counterfactual reasoning attempt
to identify the smallest amount of perturbation on the input graph that changes the GNN’s prediction, for example, removal/addition of edges or nodes. 
\looseness=-1

\begin{figure}[b]
\vspace{-0.28in}
    \centering
         \subfloat[Formaldehyde]{\label{subfig::formaldehyde}\includegraphics[height=0.1\textwidth]{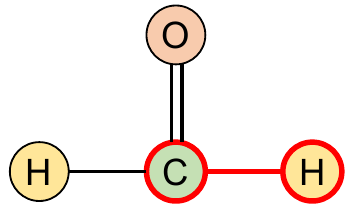}}
    \quad\quad   
    \subfloat[Formic acid]{\label{subfig::formicacid}\includegraphics[height=0.1\textwidth]{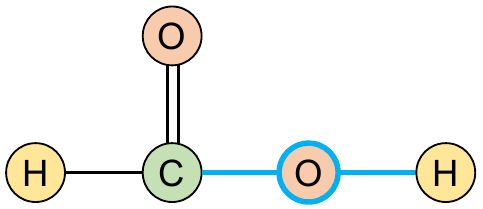}}
    \vspace{-0.12in}
    \caption{%
    Formaldehyde (a) is classified by a GNN to be an undesired mutagenic molecule with its important subgraph found by factual reasoning highlighted in red. Formic acid (b) is its non-mutagenic counterfactual example obtained by removing one edge and adding one node and two edges. %
    \looseness=-1}
    \label{fig::example_counterfactual}
\end{figure}

Compared to factual reasoning, counterfactual explainers have the additional advantage of providing the means for recourse \cite{voigt2017eu}. For example, in the applications of drug discovery \cite{jiang2020drug, xiong2021graph}, mutagenicity is an adverse property of a molecule that hampers its
potential to become a marketable drug \cite{kazius2005derivation}. In \autoref{fig::example_counterfactual}, formaldehyde is classified by a GNN to be mutagenic. Factual explainers can attribute the subgraph containing the carbon-hydrogen bond to the cause of mutagenicity, while counterfactual explainers provide an effective way (i.e., a recourse) to turn formaldehyde into formic acid, which is non-mutagenic, by replacing a hydrogen atom with a hydroxyl.  
\looseness=-1

In this work, we focus on counterfactual explanations. Our work is based on the observation that existing counterfactual explainers \cite{ying2019gnnexplainer,luo2020parameterized,vu2020pgm,xgnn_kdd20} for graphs take a \textit{local} perspective, generating counterfactual examples for individual input graphs. However, this approach has two key limitations: 
\begin{itemize}
\item {\bf Lack of global insights: } It is desirable to offer insights that generalize across a multitude of data graphs. For example, instead of providing formic acid as a counterfactual example to formaldehyde, we can summarize global recourse rules such as ``\emph{Given any molecule with a carbonyl group (carbon-oxygen double bond), it needs a hydroxy to be non-mutagenic}''. This focus on global counterfactual explanation promises to provide higher-level insights that are complementary to those obtained from local counterfactual explanations. 

\item {\bf Information overload:} The primary motivation behind counterfactual analysis is to provide human-intelligible explanations. With this objective, consider real-world graph datasets that routinely contain thousands to millions of graphs. Owing to instance-specific counterfactual explanations, the number of counterfactual graphs grows linearly with the graph dataset size. Consequently, the sheer volume of counterfactual graphs overloads human cognitive ability to process this information. Hence, the initial motivation of providing human-intelligible insights is lost if one does not obtain a holistic view of the counterfactual graphs.
\looseness=-1
\end{itemize}

\vspace{\baselineskip}
\noindent
\textbf{Contributions:} In this paper, we study the problem of model-agnostic, \emph{global} counterfactual explanations of GNNs for graph classification. More specifically, given a graph dataset, our goal is to counterfactually explain the largest number of input graphs with a small number of counterfactuals. As we will demonstrate later in our experiments, this formulation naturally forces us to remove redundancy from instance-specific counterfactual explanations and hence has higher information density. Algorithmically, the proposed problem introduces new challenges. We theoretically establish that the proposed problem is NP-hard. Furthermore, the space of all possible counterfactual graphs itself is exponential. Our work overcomes these challenges and makes the following contributions: \looseness=-1

\begin{itemize}
    \item \textbf{Novel formulation:} We formulate the novel problem of global counterfactual reasoning/explanation of GNNs for graph classification. In contrast to existing works on counterfactual reasoning that only generate instance-specific examples, we provide an explanation on the global behavior of the model.
    \item {\bf Algorithm design:} While the problem is NP-hard, we propose \name, which organizes the exponential search space as an \textit{edit map}. We then perform \textit{vertex-reinforced random walks} on it to generate diverse, representative counterfactual candidates, which are \textit{greedily summarized} as the global explanation. 
    \item {\bf Experiments:} We conduct extensive experiments on real-world datasets to validate the effectiveness of the proposed method. Results show that \name not only provides important high-level insights on the model behavior but also outperforms state-of-the-art baselines related to counterfactual reasoning in various recourse quality metrics. 
\end{itemize}

\section{Global Counterfactual Explanations }
This section introduces the global counterfactual explanation (GCE) problem for graph classification. We start with the background on local counterfactual reasoning. %
Then, we propose a representation of the global recourse rule that provides a high-level counterfactual understanding of the classifier behavior. Finally, we introduce quality measures for recourse rules and formally define the GCE problem. \looseness=-1

\subsection{Local Counterfactual}
Consider a graph $G=(V,E)$, where $V$ and $E$ are the sets of (labelled) nodes and edges respectively. A (binary) graph classifier (e.g., a GNN) $\phi$ classifies $G$ into either the undesired class ($\phi(G) = 0$) or the desired one ($\phi(G)=1$). An explanation of $\phi$ seeks to answer how these predictions are made. Those based on factual reasoning %
analyze what properties $G$ possesses to be classified in the current class while those based on counterfactual reasoning 
find what properties $G$ needs to be assigned to the opposite class. 
 
Existing counterfactual explanation methods take a local perspective. Specifically, for each input graph $G$, they find a \textbf{counterfactual} (graph) $C$ that is somewhat similar to $G$ but is assigned to a different class. Without loss of generality, let $G$ belong to the undesired class, i.e., $\phi(G)=0$, then the counterfactual $C$ satisfies $\phi(C)=1$. The similarity between $C$ and $G$ is quantified by a predefined distance metric $d$, for example, the number of added/removed edges \cite{lucic2021cf, bajaj2021robust}. \looseness=-1

In our work, we consider the graph edit distance (GED) \cite{sanfeliu1983distance}, a more general distance measure, as the distance function to account for other types of changes. Specifically, $\GED(G_1, G_2)$ counts the minimum number of ``edits'' to convert $G_1$ to $G_2$. An ``edit'' can be the addition or removal of edges and nodes, or change of node labels (see \autoref{fig::edit_map}). Moreover, to account for graphs of different sizes, we normalize the GED by the sizes of graphs: $\widehat{\GED}(G_1, G_2) = \GED(G_1, G_2) / (|V_1| + |V_2| + |E_1| + |E_2|)$. Nonetheless, our method can be applied with other graph distance metrics, such as those based on graph kernels (e.g., RW \cite{borgwardt2006fast}, NSPDG \cite{costa2010fast}, WL \cite{shervashidze2011weisfeiler}). 
\begin{figure}[htbp]
    \centering
    \includegraphics[width=0.8\columnwidth]{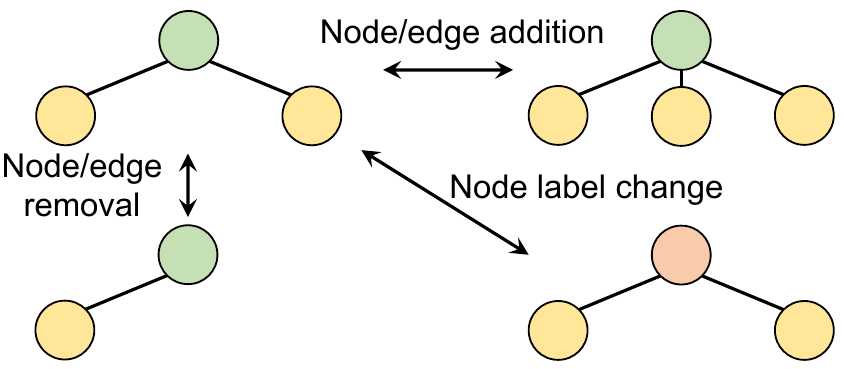}
    \vspace{-0.15in}
    \caption{Edits between graphs. }
    \label{fig::edit_map}
\vspace{-0.05in}
\end{figure}

The distance function measures the quality of the counterfactual found by the explanation model. Ideally, the counterfactual $C$ should be very close to the input graph $G$ while belonging to a different class. Formally, we define the counterfactuals that are within a certain distance $\theta$ from the input graph as \textbf{close counterfactuals}.  %

\begin{definition}[Close Counterfactual]
 Given the GNN classifier $\phi$, distance parameter $\theta$, and an input graph $G$ with undesired outcome, i.e., $\phi(G)=0$; a counterfactual graph, $C$, is a close counterfactual of $G$ when $\phi(C)=1$ and $d(G, C) \leq \theta$.
 \end{definition}

While the (close) counterfactual $C$ found by existing methods explains the classifier behavior for the corresponding input graph $G$, it is hard to generalize to understand the global pattern. Next, we introduce the global recourse rule that provides a high-level summary of the classifier behavior across different input graphs. \looseness=-1

\subsection{Global Recourse Representation}
The global counterfactual explanation requires a global recourse rule $r$. Specifically, for any (undesired) input graph $G$ with $\phi(G)=0$, $r$ provides a (close) counterfactual (i.e., a recourse) for $G$: $\phi(r(G)) = 1$. While both a recourse rule and a local counterfactual explainer find a counterfactual given an input graph, their goals are different. The goal of the local counterfactual explainer is to find the best (closest) counterfactual possible for each input graph, and therefore, $r$ can be very complicated, e.g., in the form of an optimization algorithm \cite{bajaj2021robust, tan2022learning}. On the other hand, a recourse rule aims to provide an explanation of the classifier's global behavior, which requires a simpler form that is understandable for domain experts without prior knowledge of deep learning on graphs. %

Existing global recourse rules for classifiers with feature vectors as input take the form of short decision trees \cite{rawal2020beyond}. However, this is hard to be generalized to graph data with rich structure information. Instead, we propose the representation of a global recourse rule for a graph classifier to be a collection of counterfactual graphs $\mathbb{C}$ in the desired class that are \textit{diverse} and \textit{representative} enough to capture its global behavior. This representation does not require any additional knowledge for domain experts to understand and draw insights from, similar to the local counterfactual examples. It is also easy to find the local counterfactual for a given input graph $G$ based on $\mathbb{C}$ by nominating the closest graph in $\mathbb{C}$: $r(G) = \argmin_{C \in \mathbb{C}} d(G, C)$.

\subsection{Quantifying Recourse Quality}
Given a graph classifier $\phi$ and a set of $n$ input graphs $\mathbb{G}$ in the undesired class, we want to compare the quality of different recourse representations $\mathbb{C}$. Similar to the quality metrics introduced for vector data \cite{rawal2020beyond}, we aim to account for the following factors: 
\begin{enumerate}
    \item \textbf{Coverage}: Like local counterfactual explainers, we want to ensure that counterfactuals found for individual input graphs are of high quality. Specifically, we introduce recourse coverage---the proportion of input graphs that have close counterfactuals from $\mathbb{C}$ under a given distance threshold $\theta$: 
    $$\Cover(\mathbb{C}) = |\{G \in \mathbb{G} \mid \min_{C \in \mathbb{C}} \left\{d(G, C)\right\} \leq \theta\}| / |\mathbb{G}|$$
    \item \textbf{Cost}: Another quality metric based on local counterfactual quality is the recourse cost (i.e., the distance between the input graph and its counterfactual) across the input graphs:
	$$\Cost(\mathbb{C}) = \agg_{G \in \mathbb{G}} \{\min_{C \in \mathbb{C}} \{d(G, C)\}\}$$
	where $\agg$ is an aggregation function, e.g., mean or median. 
    \item \textbf{Interpretability}: Finally, the recourse rule should be easy (small) enough for human cognition. We quantify the interpretability as the size of recourse representation:
    $$\Size(\mathbb{C}) = |\mathbb{C}|$$
\end{enumerate}
\subsection{Problem Formulation and Characterization}
An ideal recourse representation should maximize the coverage while minimizing the cost and the size. Formally, we define the global counterfactual explanation problem as follows: 

\begin{problem}[Global Counterfactual Explanation for Graph Classification (GCE)] 
Given a GNN graph classifier $\phi$ that classifies $n$ input graphs $\mathbb{G}$ to the undesired class 0 and a budget $k \ll n$, our goal is to find the best recourse representation $\mathbb{C}$ that maximizes the recourse coverage with size $k$: 
$$\max_{\mathbb{C}}\Cover(\mathbb{C}) \text{ s.t. }\Size(\mathbb{C}) = k$$
\end{problem}

We note that in our problem formulation only coverage and size are explicitly accounted for, whereas cost is absent. We make this design choice since cost and coverage are intrinsically opposing forces. Specifically, if we are willing to allow a high cost, coverage increases since we allow for higher individual distances between an input graph and its counterfactual. Therefore, we take the approach of binding the cost to the distance threshold $\theta$ in the coverage definition. Nonetheless, an explicit analysis of all these metrics including cost is performed to quantify recourse quality during our empirical evaluation in Section~\ref{sec:experiments}. Below we discuss the hardness of GCE. \looseness=-1
\begin{thm}
[NP-hardness]
\label{thm:nphard}
The GCE problem is NP-hard.
\end{thm}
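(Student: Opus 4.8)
The plan is to reduce from \textsc{Maximum Coverage} (whose decision version is NP-complete), which is the natural source since GCE is itself a coverage maximization under a cardinality budget; \textsc{Set Cover} would serve equally well. Given a \textsc{Maximum Coverage} instance with universe $U=\{e_1,\dots,e_n\}$, subsets $S_1,\dots,S_m\subseteq U$, budget $k$, and coverage target $\tau$, I will construct in polynomial time a GCE instance $(\phi,\mathbb{G},\theta,k)$ in which each element $e_i$ becomes an undesired-class input graph $G_i$ (so $\mathbb{G}=\{G_1,\dots,G_n\}$) and each set $S_j$ becomes a desired-class graph $C_j$, with the crucial properties that $d(G_i,C_j)\le\theta$ if and only if $e_i\in S_j$ and that $\phi^{-1}(1)$ is exactly $\{C_1,\dots,C_m\}$. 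Then any admissible recourse representation $\mathbb{C}$ of size $k$ is a $k$-subfamily of $\{C_1,\dots,C_m\}$, and $\Cover(\mathbb{C})\ge\tau/n$ holds precisely when the corresponding $k$ sets cover at least $\tau$ elements; hence the \textsc{Maximum Coverage} instance is a yes-instance iff the GCE one is, establishing NP-hardness. (Padding $U$ with elements contained in no set also lets us respect the $k\ll n$ regime.)

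For the distance I use $d=\GED$ (the formulation explicitly permits other graph distances, and the unnormalized GED keeps the arithmetic clean; the normalized variant $\widehat{\GED}$ can be handled by attaching inert components so that all pairwise size sums coincide). I build every graph from a common, distinctly labelled ``scaffold'' --- say a path whose nodes carry pairwise distinct labels --- carrying $N\ge n$ designated slot nodes $p_1,\dots,p_N$ attached at distinct positions, each slot labelled either ``on'' or ``off''. The input graph $G_i$ is the scaffold with only $p_i$ set to ``on''. After padding every $S_j$ to a common size $s\ge 2$ using fresh dummy slot indices, $C_j$ is the scaffold with exactly the slots of the padded $S_j$ set to ``on''. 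Since the scaffolds align at zero cost, a direct edit sequence gives $\GED(G_i,C_j)=s-1$ when $e_i\in S_j$ and $\GED(G_i,C_j)=s+1$ otherwise, so setting $\theta:=s-1$ realises the desired threshold behaviour. Finally, because the scaffold is rigid --- its distinct labels pin down every node's identity, and with it the on/off state of each slot --- the labelled graphs $C_1,\dots,C_m$ are pairwise non-isomorphic and, more importantly, each is recognisable among \emph{all} graphs from its Weisfeiler--Leman colour multiset; a polynomially sized GNN can therefore compute node colours by one round of message passing, pool them, and have the readout output $1$ on exactly those $m$ colour multisets, giving $\phi^{-1}(1)=\{C_1,\dots,C_m\}$.

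The bulk of the remaining work, and the place where care is needed, lies in justifying these two facts rather than the high-level correspondence. First, one must verify that the ``obvious'' slot-flipping edit sequences are in fact GED-optimal; this is where the rigidity of the distinctly labelled scaffold does the work, since any alternative node alignment or structural edit can be charged at least the cost of the canonical one. Second --- and this is the conceptually important point --- the reduction genuinely needs $\phi$ to be an arbitrary GNN constrained to accept only $\{C_1,\dots,C_m\}$: if the desired class were instead defined by a trivial predicate (e.g.\ ``contains a marker node''), an adversary could synthesise an arbitrary size-$s$ slot pattern and GCE would collapse to a covering problem solvable exactly. Hence the load-bearing step is the construction of the polynomially sized $\phi$ with $\phi^{-1}(1)=\{C_1,\dots,C_m\}$, which I would carry out via the expressiveness of GNNs on rigidly (distinctly) labelled graphs as sketched above; everything else is bookkeeping.
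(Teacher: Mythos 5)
Your proof follows essentially the same route as the paper's: a reduction from \textsc{Maximum Coverage} in which each universe element becomes an undesired input graph, each set becomes a desired-class graph built on a common labelled scaffold with ``slots'' encoding membership, and the distance threshold is tuned so that a counterfactual covers an input graph exactly when the corresponding set contains the corresponding element. Your writeup is in fact more careful than the paper's sketch on the two points it glosses over --- verifying that the slot-flipping edit sequences are GED-optimal via scaffold rigidity, and realising the classifier as an actual GNN with $\phi^{-1}(1)=\{C_1,\dots,C_m\}$ so that feasible solutions are forced to be subfamilies of the set gadgets --- so no gap to report.
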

\begin{proof}

To establish NP-hardness of the proposed problem we reduce it from the classical \textit{Maximum Coverage} problem.
\begin{definition}
[Maximum Coverage]
Given a budget $k$ and a collection of subsets $\mathcal{S}=\{S_{1},\cdots,S_{m}\}$ from a universe of items $U=\{u_{1},\cdots,u_{n}\}$, find a subset $\mathcal{S}'\subseteq \mathcal{S}$ of sets such that $|\mathcal{S}'|\leq k$ and the number of covered elements $|\bigcup_{\forall S_i\in S'} S_i|$ is maximized.
\end{definition}

We show that given any instance of a maximum coverage problem $\langle \mathcal{S}, U\rangle$, it can be mapped to a GCE problem. %
For $u_i$, we construct a star graph with a center node with an empty label and $n$ leaf nodes with $n-1$ empty labels and one label $u_i$. For $S_i$, we construct a similar star graph with a center node with a special label $\gamma$ and $n$ leaf nodes with $|S_i|$ labeled with the elements in $S_i$ and $n-|S_i|$ with empty labels. The classifier $\phi$ classifies a graph as a desired one if and only if it is a star graph with a $\gamma$-labeled central node and $n$ leaf nodes with a set of labels among $\mathcal{S}=\{S_{1},\cdots,S_{m}\}$. The allowed edit operations are either adding or deleting a set of labels (as a single edit), but not both together. So, each $S_i$ corresponds to a counterfactual candidate $C_i$ and $d(G_j, C_i) \leq \theta = 1$ if and only if $u_j\in S_i$. With this construction, it is easy to see that an optimal solution for this instance of GCE is the optimal solution for the corresponding instance of the maximum coverage problem. 
\end{proof} 

Owing to NP-hardness, it is not feasible to identify the optimal solution for the GCE problem in polynomial time unless NP $=$ P.  In the next section, we will introduce \name, an effective and efficient heuristic that solves the GCE problem.

\section{Proposed Method: \name}
In this section, we propose \name, the first global counterfactual explainer for graph classification. The GCE problem requires us to find a collection of $k$ counterfactual graphs that maximize the coverage of the input graphs. Intuitively, we want each individual counterfactual graph to be a close counterfactual to (i.e., ``cover'') as many input graphs as possible. Additionally, different counterfactual graphs should cover different sets of input graphs to maximize the overall coverage. These intuitions motivate the design of our algorithm \name, which has three major components: 
\begin{enumerate}
    \item \textbf{Structuring the search space:} The search space of counterfactual graphs consists of \textit{all} graphs that are in the same domain as the input graphs and within a distance of $\theta$. In other words, any graph within a distance of $\theta$ from an input graph may be a potential counterfactual candidate and therefore needs to be analyzed. The number of potential graphs within $\theta$ increases exponentially with $\theta$ since the space of graph edits is combinatorial~\cite{ged1,ranjan2021neural}. %
    \name uses an \textit{edit map} to organize these graphs as a meta-graph $\mathcal{G}$, where individual nodes are graphs that are created via a different number of edits from the input graphs and each edge represents a single edit. 
    
    \item \textbf{Vertex-reinforced random walk:} To search for good counterfactual candidates, \name leverages vertex-reinforced random walks (VRRW) \cite{pemantle1992vertex} on the edit map $\mathcal{G}$. VRRW has the nice property of converging to a set of nodes that are both important (i.e., cover many input graphs) and diverse (i.e., non-overlapping coverage), which will form a small set of counterfactual candidates for further processing. 
    
    \item \textbf{Iterative computation of the summary:} After obtaining good counterfactual candidates from VRRW, \name creates the final set of the counterfactual graphs (i.e., the summary) as the recourse representation by iteratively adding the best candidate based on the maximal gain of the coverage given the already added candidates. 
    
\end{enumerate}

\subsection{Structuring the Search Space}
The search space for counterfactual graphs in \name is organized via an edit map $\mathcal{G}$. The edit map is %
a meta-graph whose nodes are graphs in the same domain as the input graphs and edges connect graphs that differ by a single graph edit. As an example, each graph in \autoref{fig::edit_map} represents a node in the edit map, and the arrows denote edges between graphs (nodes) that are one edit away. In the edit map, we only include connected graphs since real graphs of interest are often connected (e.g., molecules, proteins, etc.). 

While all potential counterfactual candidates are included as its nodes, the edit map has an exponential size %
and it is computationally prohibitive to fully explore it. However, a key observation is that a counterfactual candidate can only be a few hops away from some input graph. Otherwise, the graph distance between the counterfactual and the input graph would be too large for the counterfactual to cover it. This observation motivates our exploration of the edit map to be focused on the union of close neighborhoods of the input graphs (see Section \ref{subsubsec::teleportation}). Additionally, while we cannot compute the entire edit map, it is easy to chart the close neighborhoods by iteratively performing all possible edits from the input graphs. Next, we introduce the vertex-reinforced random walk to efficiently explore the edit map to find counterfactual candidates.

\subsection{Vertex-Reinforced Random Walk}

Vertex-reinforced random walk (VRRW) \cite{pemantle1992vertex} is a time-variant random walk. Different from other more widely applied random walk processes such as the simple random walk and the PageRank \cite{perozzi2014deepwalk, klicpera2018predict, huang2021broader, huang2022pole}, the transition probability $p(u, v)$ of VRRW from node $u$ to node $v$ depends not only on the edge weight $w(u, v)$ but also the \textit{number of previous visits} in the walk to the target node $v$, which we denote using $N(v)$. Specifically, 
\begin{equation}
   p(u, v) \propto w(u, v)N(v)
   \label{eq::transition}
\end{equation}

\name applies VRRW on the edit map and produces $n$ most frequently visited nodes in the walk as the set of counterfactual candidates $\mathbb{S}$. Next, we formalize VRRW in our setting and explain how it surfaces good counterfactual candidates for GCE. 

\subsubsection{Vertex-reinforcement}
Our main motivation for using VRRW to explore the edit map instead of other random walk processes is that VRRW converges to a diverse and representative set of nodes \cite{mei2010divrank, natarajan2016scalable} in different regions of the edit map. In this way, the frequently visited nodes in instances of VRRW have the potential to be good counterfactual candidates as they would cover a diverse set of input graphs in the edit map. The reason behind the diversity of the highly visited nodes is the previous visit count $N(v)$ in the transition probability. Specifically, nodes with larger visit counts tend to be visited more often later (``richer gets richer''), and thereby dominating all other nodes in their neighborhood. This leads to a bunch of highly visited nodes to ``represent'' each region of the edit map. We refer the readers to \cite{mei2010divrank} for details on the mathematical basis and the theoretical correctness of this property. Moreover, as our goal is to find counterfactual candidates, we only reinforce (i.e., increase the visit counts of) graphs in the counterfactual class. 

\subsubsection{Importance function}

While the vertex-reinforcement mechanism ensures diversity of the highly visited nodes, we still need to guide the walker to visit graphs that are good counterfactual candidates. We achieve this by assigning large edge weight $w(u, v)$ to good counterfactual candidates via an importance function $I(v)$:
\begin{equation}
    w(u, v)=I(v)
\end{equation}
The importance function $I(v)$ should capture the quality of a graph $v$ as a counterfactual candidate. It has the following components:
\begin{enumerate}
    \item Counterfactual probability $p_\phi(v)$. The graph classifier $\phi$ predicts a probability for $v$ to be in the counterfactual class ($\phi(v)=1$). By using it as part of the importance function, the walker is encouraged to visit regions with rich counterfactual graphs.
    \item Individual coverage $\Cover(\{v\})$. The individual coverage of a graph $v$ computes the proportion of input graphs that are close to $v$. This encourages the walker to visit graphs that cover a large number of input graphs. 
    \item Gain of coverage $\gain(v;\mathbb{S})$. Given a graph $v$ and the current set of counterfactual candidates $\mathbb{S}$ (i.e., the $n$ most frequently visited nodes), we can compute the gain between the current coverage and the coverage after adding $v$ to $\mathbb{S}$:  
    $$\gain(v;\mathbb{S}) = \Cover(\mathbb{S}\cup\{v\})- \Cover(\mathbb{S})$$
    This guides the walker to find graphs that complement the current counterfactual candidates to cover additional input graphs. 
\end{enumerate}
The importance function is a combination of these components: 
\begin{equation}
    I(v) = p_\phi(v)(\alpha\Cover(\{v\}) +(1-\alpha)\gain(v;\mathbb{S}))
    \label{eq::importance}
\end{equation}
where $\alpha$ is a hyperparameter between 0 and 1. With the above importance function, the VRRW in \name converges to a set of diverse nodes that have high counterfactual probability and collectively cover a large number of input graphs. 

\subsubsection{Dynamic teleportation}\label{subsubsec::teleportation}
The last component of VRRW, teleportation, is to help us manage the exponential search space of the edit map. Since our goal is to find close counterfactuals to the input graphs, the walker only needs to explore the nearby regions of the input graphs. Therefore, we start the walk from the input graphs, and also at each step, let the walker teleport back (i.e. transit) to a random input graph with probability $\tau$. 

To decide which input graph to teleport to, we adopt a dynamic probability distribution based on the current counterfactual candidate set $\mathbb{S}$. Specifically, let $g(G)=|\{v \in \mathbb{S} \mid d(v, G) \leq \theta \text{ and } \phi(v)=1\}|$ be the number of close counterfactuals in $\mathbb{S}$ covering an input graph $G$. Then the probability to teleport to $G$ is 
\begin{equation}
    p_{\tau}(G) = \frac{\exp (-g(G))}{\sum_{G' \in \mathbb{G}} \exp (-g(G'))}
    \label{eq::teleportation}
\end{equation}

This dynamic teleportation favors input graphs that are not well covered by the current solution set and encourages the walker to explore nearby counterfactuals to cover them after teleportation. %

\subsection{Iterative Computation of the Summary}
\label{subsec::summary}
We have applied VRRW to generate a good set of $n$ counterfactual candidates $\mathbb{S}$. In the last step of \name, we aim to further refine the candidate set and create the final recourse representation (i.e., the summary) with $k$ counterfactual graphs. This summarization problem is also NP-hard and we propose to build $\mathbb{C}$ in an iterative and greedy manner from $\mathbb{S}$.

Specifically, we start with an empty solution set $\mathbb{C}_0$. Then, for each iteration $t$, we add the graph $v$ to $\mathbb{C}_t$ with the maximal gain of coverage $\gain(v; \mathbb{C}_t)$. This is repeated $k$ times to get the final recourse representation $\mathbb{C}$ with $k$ graphs. It is easy to show that the summarization problem is submodular and therefore, our greedy algorithm provides $(1-1/e)$-approximation. 

Notice that the greedy algorithm can also be applied to the local counterfactuals found by existing methods to generate a GCE solution. Here, we highlight three advantages of \name: 
\begin{enumerate}
    \item Existing local counterfactual explainers \cite{lucic2021cf,bajaj2021robust,abrate2021counterfactual,tan2022learning} are only able to generate counterfactuals based on one type of graph edits---edge removal, while \name incorporates all types of edits to include a richer set of counterfactual candidates. 
    \item The set of counterfactual candidates from \name is generated with the GCE objective in mind, while the local counterfactuals from existing methods are optimized for individual input graphs. Therefore, they may not be good candidates to capture the global behavior of the classifier.
    \item It is easy to incorporate domain constraints (e.g., the valence of chemical bonds) into \name by pruning the neighborhood of the edit map, while existing methods based on optimization require non-trivial efforts to customize. 
\end{enumerate}
We will empirically demonstrate the superiority of \name to this two-stage approach with state-of-the-art local counterfactual explanation methods in our experiments in Section~\ref{subsec::quality}. 
\vspace{\baselineskip}

\noindent\textbf{Pseudocode and complexity:} The pseudocode of \name is presented in \autoref{alg::gcf}. Line 1-16 summarizes the VRRW component of \name. Specifically, Line 3-10 determines the next graph to visit based on VRRW transition probabilities and dynamic teleportation, and Line 11-16 update the visit counts and the set of counterfactual candidates. 
The iterative computation of the counterfactual summary is described in Line 17-21. The overall complexity of \name is $O(Mhn + kn)$, where $M$ is the number of iterations for the VRRW, $h$ is the average node degree in the meta-graph, $n$ is the number of input graphs, and $k$ is the size of the global counterfactual representation. In practice, we store the computed transition probabilities with a space-saving algorithm \cite{metwally2005efficient} to improve the running time of \name. 

\begin{algorithm}[t]
    \caption{\textsc{GCFExplainer}($\phi$, $\mathbb{G}$)}
    \label{alg::gcf}
    {\small
    \begin{algorithmic}[1]
    \STATE $G \gets$ random input graph from $\mathbb{G}$, $N(G) \gets 1$, $\mathbb{S} = \{G\}$
    \FOR{$i \in 1:M$}
    \STATE Let $\epsilon \sim Bernoulli(\tau)$
    \IF{$\epsilon = 0$}
    \FOR{$v \in Neighbors(G)$}
    \STATE Compute $I(v)$ based on \autoref{eq::importance}
    \STATE Compute $p(G, v)$ based on \autoref{eq::transition}
    \ENDFOR
    \STATE $v \gets $ random neighbor of $G$ based on $p(G, v)$ 
    \ELSE
    \STATE $v \gets $ random input graph from $\mathbb{G}$ based on \autoref{eq::teleportation}
    \ENDIF
    \IF{$\phi(v) = 1$}
    \IF{$v \in \mathbb{S}$}
    \STATE $N(v) \gets N(v) + 1$
    \ELSE
    \STATE $\mathbb{S} \gets \mathbb{S} + \{v\}$, $N(v) \gets 1$ 
    \ENDIF
    \ENDIF
    \STATE $G \gets v$
    \ENDFOR
    \STATE $\mathbb{S} \gets $ top $n$ frequently visited counterfactuals in $\mathbb{S}$
    \STATE $\mathbb{C} \gets \emptyset$
    \FOR{$t \in 1:k$}
    \STATE $v \gets \argmax_{v \in \mathbb{S}} \gain(v; \mathbb{C})$
    \STATE $\mathbb{C} \gets \mathbb{C} + \{v\}$
    \ENDFOR
    \RETURN $\mathbb{C}$
\end{algorithmic}}
\end{algorithm}

\section{Experiments}
\label{sec:experiments}

We provide empirical results for the proposed GCFExplanier along with baselines on commonly used graph classification datasets. %
Our code is available at \url{https://github.com/mertkosan/GCFExplainer}. 

\begin{table}[htbp]
\vspace{-0.05in}
\caption{The statistics of the datasets. \label{tab:dataset_stats}}
\vspace{-0.10in}
\centering
{%
\begin{tabular}{ccccc}%
\toprule
& \textbf{NCI1} & \textbf{Mutagenicity} & \textbf{AIDS} & \textbf{Proteins} \\
\midrule
\#Graphs& 3978 & 4308 & 1837 & 1113 \\
\#Nodes& 118714 & 130719 & 28905 & 43471 \\
\#Edges& 128663 & 132707 & 29985 & 81044 \\
\#Node Labels& 10 & 10 & 9 & 3 \\
\bottomrule
\end{tabular}}
\vspace{-0.05in}
\end{table}

\begin{table*}[htbp]
\centering
\caption{Recourse coverage ($\theta=0.1$) and median recourse cost comparison between \name and baselines for a 10-graph global explanation. \name consistently and significantly outperforms all baselines across different datasets. 
}
\vspace{-5pt}
\begin{tabular}{ccccccccc}%
\toprule
& \multicolumn{2}{c}{\textbf{NCI1}} & \multicolumn{2}{c}{\textbf{Mutagenicity}} & \multicolumn{2}{c}{\textbf{AIDS}} & \multicolumn{2}{c}{\textbf{Proteins}} \\
& Coverage & Cost & Coverage & Cost & Coverage & Cost & Coverage & Cost \\
\midrule
\textsc{Ground-Truth} & 
16.54\% & \underline{0.1326} & 
28.96\% & \underline{0.1275} & 
0.41\% & 0.2012 & 
8.47\% & \underline{0.2155} \\
\textsc{RCExplainer} & 
15.22\% & 0.1370 & 
\underline{31.99\%} & 0.1290 & \underline{8.96\%} & \underline{0.1531} & \underline{8.74\%} & 0.2283 \\
\textsc{CFF} & \underline{17.61\%} & 0.1331 & 30.43\% & 0.1327 & 
3.39\% & 0.1669 & 
3.83\% & 0.2557 \\
\name & \textbf{27.85\%} & \textbf{0.1281} & \textbf{37.08\%} & \textbf{0.1135} & \textbf{14.66\%} & \textbf{0.1516} & \textbf{10.93\%} & \textbf{0.1856} \\
\bottomrule
\end{tabular}
\label{tab:quality_methods}

\end{table*}

\begin{figure*}[htbp]
\vspace{-0.1in}
    \centering
    \includegraphics[width=2\columnwidth]{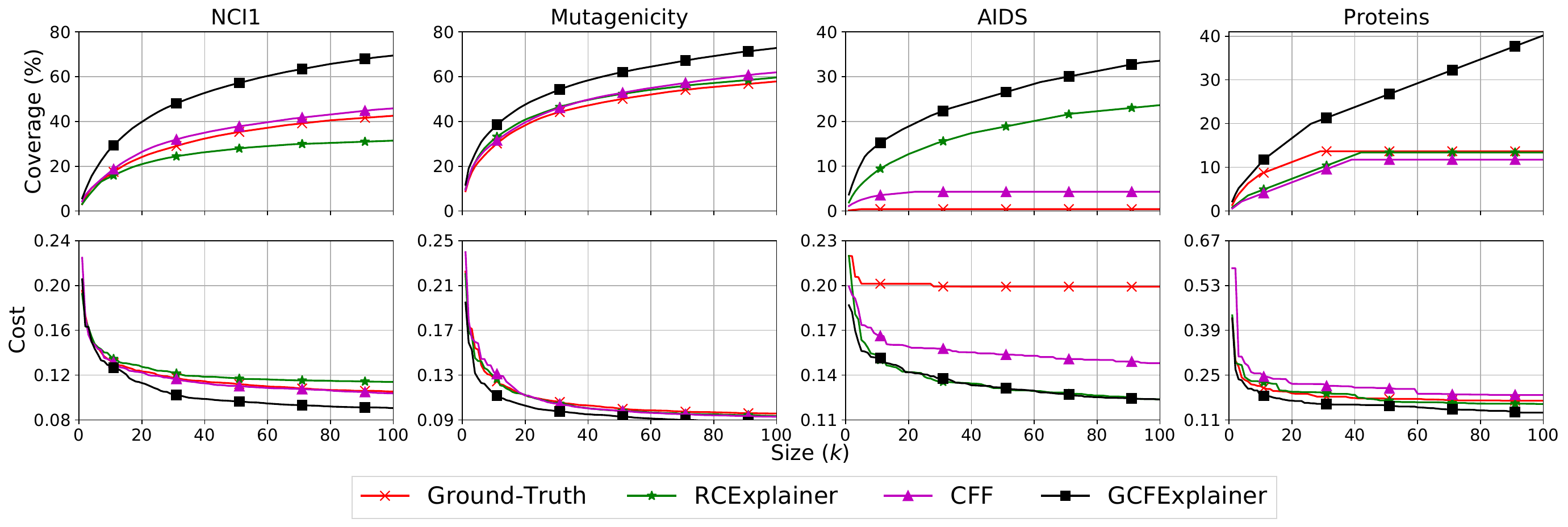}
    \vspace{-0.10in}
    \caption{Coverage and cost performance comparison between \name and baselines based on different counterfactual summary sizes. \name consistently outperforms the baselines across different sizes. 
    \label{fig:coverage_cost_summary_size}}
    \vspace{-0.05in}
\end{figure*}

\subsection{Experimental Settings}

\subsubsection{Datasets} We use four different real-world datasets for graph classification benchmark with their statistics in \autoref{tab:dataset_stats}. Specifically, NCI1 \cite{wale2006comparison}, Mutagenicity \cite{riesen2008iam, kazius2005derivation}, and AIDS \cite{riesen2008iam} are collections of molecules with nodes representing different atoms and edges representing chemical bonds between them. The molecules are classified by whether they are anticancer, mutagenic, and active against HIV, respectively. Proteins \cite{borgwardt2005protein, dobson2003distinguishing} is a collection of proteins classified into enzymes and non-enzymes, with nodes representing secondary structure elements and edges representing structural proximity. 
For all datasets, we filter out graphs containing rare nodes with label frequencies smaller than 50.

\subsubsection{Graph classifier} We follow \cite{vu2020pgm} and train a GNN with 3 convolution layers \cite{kipf2016semi} of embedding dimension 20, a max pooling layer, and a fully connected layer for classification. The model is trained with the Adam optimizer \cite{kingma2014adam} and a learning rate of 0.001 for 1000 epochs. The datasets are split into 80\%/10\%/10\% for training/validation/testing with the model accuracy shown in \autoref{tab:base_gnn_results}.
\looseness=-1

\begin{table}[h]
\caption{Accuracy of the GNN graph classifier. \label{tab:base_gnn_results}}
\vspace{-5pt}
\centering
\begin{tabular}{ccccc}%
\toprule
& \textbf{NCI1} & \textbf{Mutagenicity} & \textbf{AIDS} & \textbf{Proteins} \\
\midrule
Training& 0.8439 & 0.8825 & 0.9980 & 0.7800 \\
Validation& 0.8161 & 0.8302 & 0.9727 & 0.8198 \\
Testing& 0.7809 & 0.8000 & 0.9781 & 0.7297 \\
\bottomrule
\end{tabular}
\vspace{-2mm}
\end{table}

\subsubsection{Baselines} To the best of our knowledge, \name is the first global counterfactual explainer. To validate its effectiveness, we compare it against state-of-the-art local counterfactual explainers combined with the greedy summarization algorithm described in Section~\ref{subsec::summary}. The following local counterfactual generation methods are included in our experiments. 
\begin{itemize}
    \item \textsc{Ground-Truth}: Using graphs belonging to the desired class from the original dataset as local counterfactuals. 
    \item \textsc{RCExplainer} \cite{bajaj2021robust}: Local counterfactual explainer based on the modeling of implicit decision regions of GNNs. 
    \item \textsc{CFF} \cite{tan2022learning}: Local counterfactual explainer based on joint modeling of factual and counterfactual reasoning. 
\end{itemize}

\subsubsection{Explainer settings} We use a distance threshold $\theta$ of 0.05 for training all explainers. Since computing the exact graph edit distance is NP-hard, we apply a state-of-the-art neural approximation algorithm \cite{ranjan2021neural}. %
For \name, we set the teleportation probability $\tau=0.1$ and tune $\alpha$, the weight between individual coverage and gain of coverage, from $\{0, 0.5, 1\}$. A sensitivity analysis is presented in Section~\ref{sec:alpha_tune}. The number of VRRW iterations $M$ is set to 50000, which is enough for convergence as shown in Section~\ref{subsec::convergence} 
For baselines, we tune their hyperparameters to achieve the best local counterfactual rates while maintaining an average distance to input graphs that is smaller than the distance threshold $\theta$.  

\begin{figure*}[ht]
    \centering
    \includegraphics[width=2\columnwidth]{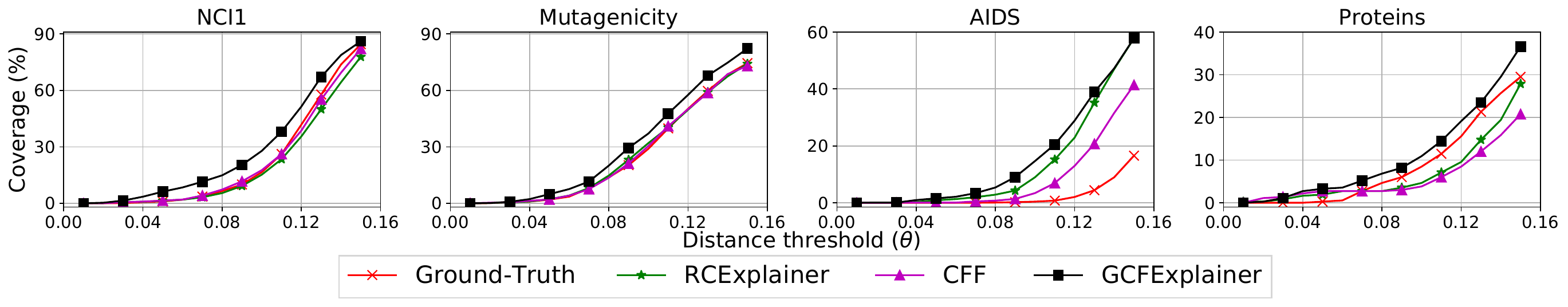}
    \vspace{-0.15in}
    \caption{Recourse coverage comparison between \name and baselines based on different distance threshold values ($\theta$). \name consistently outperforms the baselines across different $\theta$. 
    \label{fig:coverage_based_on_threshold}}
    \vspace{-0.05in}
\end{figure*}

\subsection{Recourse Quality}
\label{subsec::quality}
We start by comparing the recourse quality between \name and baselines. \autoref{tab:quality_methods} shows the recourse coverage with $\theta=0.1$ and median recourse cost of the top 10 counterfactual graphs (i.e., $k=10$). We first notice that the two state-of-the-art local counterfactual explainers have similar performance as \textsc{Ground-Truth}, consistent with our claim that local counterfactual examples from existing methods are not good candidates for a global explanation. The proposed \name, on the other hand, achieves significantly better performance for global recourse quality. Compared to the best baseline, \textsc{RCExplainer}, \name realizes a \textbf{46.9\%} gain in recourse coverage and a \textbf{9.5\%} reduction in recourse cost.

Next, we show the recourse coverage and cost for different sizes of counterfactual summary in \autoref{fig:coverage_cost_summary_size}. As expected, adding more graphs to the recourse representation increases recourse coverage while decreasing recourse cost, at the cost of interpretability. And \name maintains a constant edge over the baselines. 

We also compare the recourse coverage based on different distance thresholds $\theta$, with results shown in \autoref{fig:coverage_based_on_threshold}. While coverage increases for all methods as the threshold increases, \name consistently outperforms the baselines across different sizes.

\subsection{Global Counterfactual Insight}
We have demonstrated the superiority of \name based on various quality metrics for global recourse. Here, we show how \name provides global insights compared to local counterfactual examples. 
\autoref{fig:aids_case_study} illustrates (a) four input undesired graphs with a similar structure from the AIDS dataset, (b) corresponding local counterfactual examples (based on \textsc{RCExplainer} and \textsc{CFF}), and (c) the representative global counterfactual graph from \name covering the input graphs.
Our goal is to understand why the input graphs are inactive against AIDS (undesired) and how to obtain the desired property with minimal changes. 

The local counterfactuals in (b) attribute the classification results to different edges in individual graphs (shown as red dotted lines) and recommend their removal to make input graphs active against HIV. Note that while only two edits are proposed for each individual graph, they appear at different locations, which are hard to generalize for a global view of the model behavior. In contrast, the global counterfactual graph from \name presents a high-level recourse rule. Specifically, the carbon atom with the carbon-oxygen bond is connected to \emph{two} other carbon atoms in the input graphs, making them ketones (with a C=O bond) or ethers (with a C-O bond). On the other hand, the global counterfactual graph highlights a different functional group, aldehyde (shown in blue), to be the key for combating AIDS. In aldehydes, the carbon atom with a carbon-oxygen bond is only connected to \emph{one} other carbon atom, leading to different chemical properties compared to ketones and ethers. Indeed, aldehydes have been shown to be effective HIV protease inhibitors \cite{sarubbi1993peptide}. 

Finally, this case study also demonstrates that counterfactual candidates found by \name are better for global explanation than local counterfactuals. We note that while the graph edit distance between the local counterfactuals and their corresponding input graphs is only 2, they do not cover other similarly structured input graphs (with distance $>5$). Meanwhile, our global counterfactual graph covers all input graphs (with distance $\leq 4$).

\begin{figure}[htbp]
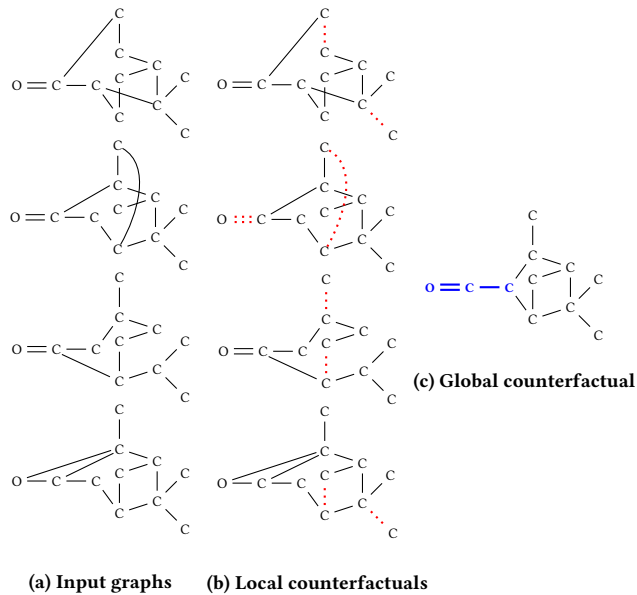


\begin{minipage}{.15\textwidth}%
\centering
  \tiny\chemfig{C(-[:-18,1.5,,,,,,])([:180]-C(-[:50,2.2,,,,,,])=O)*5(-C([:90,1.0]-C?)-[,,,,thick,dashdotted,,brown,draw=none]C([:45]-C)([:-45]-C)-C?-C([:90]-C)-[,,,,thick,dashdotted,,brown,draw=none])}
  \qquad     
  \tiny\chemfig{C([:180]-C(-[:30,1.6,,,,,,])=O)*5(-@{b}C([:90,1.0,,,thick,dashdotted,,brown,draw=none]-C?)-C([:45]-C)([:315]-C)-C?-C([:90]-@{a}C)-[,,,,thick,dashdotted,,brown,draw=none])}
  \chemmove{\draw[-](a)..controls +(-30:3em) and +(60:3em)..(b);}
  \qquad
  \tiny
    \chemfig{C([:180]-C=O)*5(-[,,,,thick,dashdotted,,brown,draw=none]C(-[:155,1.6,,,,,,])([:90,1.0]-C?)-C([:45]-C)([:315]-C)-[,,,,thick,dashdotted,,brown,draw=none]C?-C([:90]-C)-)}
  \qquad  
  \tiny
    \chemfig{C([:180]-C(-[:26,1.6,,,,,,])-O(-[:18,2.5,,,,,,]))*5(-C([:90,1.0]-C?)-C([:45]-C)([:315]-C)-C?-C([:90]-C)-[,,,,thick,dashdotted,,brown,draw=none])}
 \\
  \vspace{5mm}
  \scalebox{1.5}{\textbf{(a) Input graphs}}
\end{minipage}
\begin{minipage}{.15\textwidth}%
\centering
   \tiny\chemfig{C(-[:-18,1.4,,,,,,])([:180]-C(-[:50,2.2,,,,,,])=O)*5(-C([:90,1.0]-C?)-[,,,,draw=none]C([:45]-C)(-[:315,1.1,,,thick,dotted,,red]C)-C?-C(-[:90,,,,thick,dotted,,red]C))}
   \qquad 
  \tiny\chemfig{C([:180]-C(-[:30,1.6,,,,,,])=[,,,,thick,dotted,,red]O)*5(-@{b}C([:90,1.0,,,thick,dashdotted,,brown,draw=none]-C?)-C([:45]-C)([:315]-C)-C?-C([:90]-@{a}C)-[,,,,thick,dashdotted,,brown,draw=none])}
  \chemmove{\draw[-,thick,dotted,red](a)..controls +(-30:3em) and +(60:3em)..(b);}
  \qquad 
    \tiny\chemfig{C([:180]-C=O)*5(-[,,,,thick,dashdotted,,brown,draw=none]C(-[:155,1.6,,,,,,])(-[:90,1.0,,,thick,dotted,,red]C?)-C([:45]-C)([:315]-C)-[,,,,thick,dashdotted,,brown,draw=none]C?-C(-[:90,,,,thick,dotted,,red]C)-)}
  \qquad 
    \tiny\chemfig{C([:180]-C(-[:26,1.6,,,,,,])-O(-[:18,2.5,,,,,,]))*5(-C(-[:90,1.0,,,thick,dotted,,red]C?)-C([:45]-C)(-[:315,1.1,,,thick,dotted,,red]C)-C?-C([:90]-C)-[,,,,thick,dashdotted,,brown,draw=none])}
 \\
  \vspace{4.5mm}
  \scalebox{1.5}{\textbf{(b) Local counterfactuals}}
\end{minipage}
\begin{minipage}{.15\textwidth}
\centering
    \tiny\chemfig{\textcolor{blue}{\textbf{C}}([:180]-[,,,,thick,,,blue]\textcolor{blue}{\textbf{C}}=[,,,,thick,,,blue]\textcolor{blue}{\textbf{O}})*5(-C(-[:90,1.0]C?)-C([:45]-C)([:315]-C)-C?-C([:90]-C)-)} \\
    \vspace{4mm}
  \scalebox{1.5}{\textbf{(c) Global counterfactual}}
\end{minipage}
\vspace{-0.05in}
\caption{Illustration of global and local counterfactual explanations for the AIDS dataset. The global counterfactual graph (c) presents a high-level recourse rule---changing ketones and ethers into aldehydes (shown in blue)---to combat HIV, while the edge removals (shown in red) recommended by local counterfactual examples (b) are hard to generalize. 
}

\label{fig:aids_case_study}
\end{figure}

\subsection{Ablation Study}
We then conduct an ablation study to investigate the effectiveness of \name components. We consider three alternatives:
\begin{itemize}
    \item \textsc{\name-NVR}: no vertex-reinforcement ($N(v)=1$)
    \item \textsc{\name-NIF}: no importance function ($I(v)=1$)
    \item \textsc{\name-NDT}: no dynamic teleportation ($p_\tau(G)=1/|\mathbb{G}|$)%
    \looseness=-1
\end{itemize}
The coverage results are shown in \autoref{tab:ablation}. We observe decreased performance when any of \name components is absent. 
\looseness=-1

\begin{table}[h]
\vspace{-0.05in
}
\small
\caption{Ablation study results based on recourse coverage.  \label{tab:ablation}}
\centering
\setlength{\tabcolsep}{4pt}
\begin{tabular}{ccccc}%
\toprule
& \textbf{NCI1} & \textbf{Mutagenicity} & \textbf{AIDS} & \textbf{Proteins} \\
\midrule
\name-NVR & 24.56\% & 35.44\% & 11.33\% & 8.56\% \\
\name-NIF & 13.29\% & 29.16\% & 4.54\% & 6.83\% \\
\name-NDT & 27.34\% & 36.35\% & 14.05\% & 9.28\% \\
\name & \textbf{27.85\%} & \textbf{37.08\%} & \textbf{14.66\%} & \textbf{10.93\%}  \\
\bottomrule
\end{tabular}
\end{table}

\subsection{Convergence Analysis}
\label{subsec::convergence}
In this subsection, we show the empirical convergence of VRRW based on the mutagenicity dataset in \autoref{fig:convergence_mutagenicity}. We observe that the coverage performance for different summary sizes starts to converge after 15000 iterations and fully converges after 50000 iterations, which is the number we applied in our experiments. 

\begin{figure}[ht]
\vspace{-0.05in}
    \centering
    \includegraphics[width=0.4\textwidth]{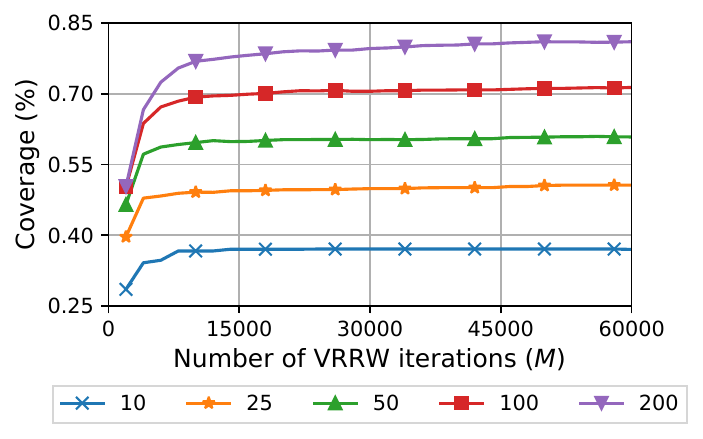}
    \vspace{-0.15in}
    \caption{Convergence of VRRW for the mutagenicity dataset based on recourse coverage with different summary sizes. VRRW fully converges after $M=50000$ iterations. 
    \label{fig:convergence_mutagenicity}}
    \vspace{-0.1in}
\end{figure}

\subsection{Sensitivity Analysis}
\label{sec:alpha_tune}
The only hyperparameter of \name we tune is $\alpha$ in \autoref{eq::importance} that weights the individual coverage and gain of coverage for the importance function. \autoref{tab:alpha_tune} shows the results based on different $\alpha$. While \name outperforms baselines with all different $\alpha$, we observe that individual coverage works better for NCI1 and gain of cumulative coverage works better for other datasets.  
\begin{table}[ht]
\caption{Sensitivity analysis on $\alpha$, the weight between individual coverage and gain of coverage in the importance function. %
\label{tab:alpha_tune}}
\vspace{-0.15in}
\centering
\begin{tabular}{ccccc}%
\toprule
& \textbf{NCI1} & \textbf{Mutagenicity} & \textbf{AIDS} & \textbf{Proteins} \\
\midrule
$\alpha = 0.0$ & \textbf{27.85\%} & 36.87\% & 12.83\% & 10.11\% \\
$\alpha = 0.5$ & 27.50\% & 36.59\% & \textbf{14.66\%} & 10.38\% \\
$\alpha = 1.0$ & 22.27\% & \textbf{37.08\%} & 13.99\% & \textbf{10.93\%} \\
\bottomrule
\end{tabular}
\end{table}

\subsection{Running Time}
\autoref{tab:running_times} summarizes the running times of generating counterfactual candidates based on different methods. %
\name has a competitive running time albeit exploring more counterfactual graphs in the process. We also include results for \textsc{\name-S} which samples a maximum of 10000 neighbors for computing the importance at each step. It achieves better running time at a negligible cost of 3.3\% performance loss on average. Finally, summarizing the counterfactual candidates takes less than a second for all methods. 
\looseness=-1

\begin{table}[ht]
\small
\caption{Counterfactual candidates generation time comparison. \name(-S) has competitive running time albeit exploring more counterfactual graphs. 
\label{tab:running_times}}
\centering
\begin{tabular}{ccccc}%
\toprule
& \textbf{NCI1} & \textbf{Mutagenicity} & \textbf{AIDS} & \textbf{Proteins} \\
\midrule
\textsc{RCExplainer} & 30454 & 52549 & 29047 & 8444 \\
\textsc{CFF} & 22794 & 31749 & 21296 & \textbf{6412} \\
\name & \underline{19817} & \underline{24006} & \underline{2615} & 19246 \\ %
\textsc{\name-S} & \textbf{19365} & \textbf{18798} & \textbf{2539} & \underline{7429} \\
\bottomrule
\end{tabular}
\end{table}

\section{Related Work}

\textbf{Explanations for Graph Neural Networks.} 
There is much research \cite{ying2019gnnexplainer,luo2020parameterized,vu2020pgm,xgnn_kdd20} on explaining graph neural networks (GNNs). The first proposed method, GNNExplainer \cite{ying2019gnnexplainer}, finds the explanatory subgraph and sub-features by maximizing the mutual information between the original prediction and the prediction based on the subgraph and sub-features. 
Later, PGExplainer \cite{luo2020parameterized} provides an inductive framework %
that extracts GNN node embeddings and learns to map  embedding pairs to the probability of edge existence in the explanatory weighted subgraph. %
PGMExplainer \cite{vu2020pgm} %
builds a probabilistic explanation model that learns new predictions from perturbed node features, performs variable selection using Markov blanket of variables, and then produces a Bayesian network via structure learning. In XGNN \cite{xgnn_kdd20}, the authors %
find model-level explanations %
by a graph generation module that outputs a sequence of edges using reinforcement learning. 
These explanation methods focus on \emph{factual} reasoning while the goal of our work is to provide a global \emph{counterfactual} explanation for GNNs. 
\looseness=-1

\noindent\textbf{Counterfactual Explanations.} 
Recently, there are several attempts to have explanations of graph neural networks (GNNs) via counterfactual reasoning \cite{lucic2021cf,bajaj2021robust,abrate2021counterfactual,tan2022learning}.
One of the earlier methods, 
CF-GNNExplainer \cite{lucic2021cf}, provides counterfactual explanations in terms of a learnable perturbed adjacency matrix that leads to the flipping of classifier prediction for a node. %
On the other hand, RCExplainer \cite{bajaj2021robust} aims to find a robust subset of edges 
whose removal changes the prediction of the remaining graph by modeling the implicit decision regions based on GNN graph embeddings.
In \cite{abrate2021counterfactual}, the authors investigate counterfactual explanations for a more specific class of graphs---the brain networks---that share the same set of nodes by greedily adding or removing edges using a heuristic. 
More recently, the authors of CFF \cite{tan2022learning} argue that a good explanation for GNNs should consider both factual and counterfactual reasoning and they explicitly incorporate those objective functions when searching for the best explanatory subgraphs and sub-features. 
Counterfactual reasoning has also been applied for link prediction \cite{zhao2022learning}.
All the above methods produce \textit{local} counterfactual examples while our work aims to provide a \emph{global} explanation in terms of a summary of representative counterfactual graphs.
\vspace{-1mm}
\section{Conclusion}
We have proposed \name, the first global counterfactual explainer for graph classification. Compared to local explainers, \name provides a high-level picture of the model behavior and effective global recourse rules. We hope that our work will not only deepen our understanding of graph neural networks but also build a bridge for experts from other domains to leverage deep learning models for high-stakes decision-making. 
\begin{acks}
This work is partially funded by NSF via grant IIS 1817046. The authors thank Sevgi Kosan and Ilhan Kosan for their helpful comments on the chemical properties in the global counterfactual analysis.
\end{acks}

\clearpage
\bibliographystyle{ACM-Reference-Format}
\bibliography{main}
\end{document}